\documentclass[10pt]{article} 
\usepackage[accepted]{tmlr}

\usepackage{alp}
\usepackage{url}
\usepackage{amssymb}
\usepackage{graphicx}
\usepackage{wrapfig}
\usepackage{makecell}
\usepackage{glossaries}

\title{Incorporating Inductive Biases to Energy-based Generative Models}


\author{\name Yukun Li \email yukun.li@tufts.edu \\
      \addr Department of Computer Science\\
      Tufts University
      \AND
      \name Li-Ping Liu \email liping.liu@tufts.edu \\
      \addr Department of Computer Science\\
      Tufts University
      }


\newcommand{\bsx}{\boldsymbol{x}}

\newtheorem*{theorem*}{Theorem}
\newtheorem*{corollary*}{Corollary}

\begin{document}

\maketitle

\begin{abstract}
With the advent of score-matching techniques for model training and Langevin dynamics for sample generation, energy-based models (EBMs) have gained renewed interest as generative models. Recent EBMs usually use neural networks to define their energy functions. In this work, we introduce a novel hybrid approach that combines an EBM with an exponential family model to incorporate inductive bias into data modeling. Specifically, we augment the energy term with a parameter-free statistic function to help the model capture key data statistics. Like an exponential family model, the hybrid model aims to align the distribution statistics with data statistics during model training, even when it only approximately maximizes the data likelihood. This property enables us to impose constraints on the hybrid model. Our empirical study validates the hybrid model's ability to match statistics. Furthermore, experimental results show that data fitting and generation improve when suitable informative statistics are incorporated into the hybrid model.

\end{abstract}

\section{Introduction}
Energy-based models (EBMs) \citep{murphy2012machine,lecun2006tutorial,ngiam2011learning}, which provide a flexible way of parameterization, are widely used in data modeling such as sensitive estimation \citep{nguyen2015performance, wenliang2019learning,jiang2021performance}, structured prediction \citep{belanger2016structured,he2020semi,pan2020adversarial}, and anomaly detection \citep{zhai2016deep,liu2020energy}. They recently have achieved successes as generative models for data of different modalities, such as images \citep{du2019implicit,vahdat2020nvae}, texts \citep{deng2020residual,yu2022latent}, graph \citep{liu2021graphebm,chen2022nvdiff,xu2022geodiff} and point cloud \citep{xie2020generative,xie2021generative,luo2021diffusion}, thanks to the advent of new training methods of score matching  \citep{song2019generative,song2020score,ho2020denoising}. 

Previous EBMs often have linear energy functions in model parameters, so they fall into the category of exponential-family models \citep{wainwright2008graphical}. An exponential family model has a few nice properties. It matches data statics when it maximizes the likelihood of the training data. Among all models, it also has the largest entropy with the constraint of matching statistics. When constructing an EBM, statistic functions are often used to capture key statistics of the data. However, in recent years, EBMs have used neural networks as their energy functions. Since their energy functions are not linear in trainable parameters, they are not exponential-family models anymore. 

In our construction of EBMs, we propose to include linear terms defined with special statistic functions in the energy function to retrieve some good properties of exponential-family models. Even with complex data types such as point clouds and graphs, we still often have domain knowledge to specify statistic functions so that the model can capture desired properties. Therefore, the linear term with special statistic functions allows us to inject inductive biases in model fitting. This framework applies to the domains where prior information can be written in statistic functions.

The proposed method is very useful for data fitting in multiple iterations. Note that data fitting is often an iterative procedure \citep{gelman1995bayesian}, and the goodness of fit is often monitored by the discrepancy between statistics computed respectively from model samples and real data. With our method, we can improve the model by adding corresponding terms to narrow the discrepancy. Taking molecule generation as an example, when we observe molecule samples violate the valency constraint, we can incorporate the knowledge of valency rules to improve the model.

We evaluate the effectiveness of this new approach on three tasks: modeling molecular data, fitting handwritten digits, and modeling point clouds. The results demonstrate the effectiveness of the proposed approach in a wide range of data-fitting domains.

\section{Related Work}
The need to incorporate inductive bias into a generative model has drawn researchers' attention for a long time. \citet{zhu1997minimax} proposes the general principle for incorporating prior knowledge into a generative model. \citet{khalifa2020distributional} introduces a novel method for text generation using EBMs to enforce desired statistical constraints. Their framework employs the EBM model to approximate a pre-trained language model instead of directly fitting the data. \citet{korbak2021energy} extends this approach to code generation, imposing a constraint that generated sequences are compilable. \citet{qin2022cold} applies energy constraints to a transformer decoder to control text semantics and style. \citet{lafon2023hybrid} learns the hybrid model combined with the Gaussian Mixture Model (GMM) and EBMs for out-of-distribution detection. In contrast, our work focuses on improving data fitting with inductive bias. 

Another thread of research changes a generative model's behavior during the sampling stage \citep{dhariwal2021diffusion}. For example, \citet{chung2022improving} proposes to use manifold constraint to guide a pre-trained diffusion model's sampling path. \citet{kim2022dag,zhao2022egsde,song2022pseudoinverse,yu2023freedom,liu2023more,bansal2023universal,li2024enhancing} propose different guidance for different specific problems to boost the performance of each task. In contrast, our method aims to fit the original data distribution more accurately by adding the statistics term. Notably, the classifier-guidance method operates during inference with manually tuned weights, whereas our model learns the inductive bias strength corresponding to data distribution statistics. 

\section{Background }
\subsection{Score-matching for energy-based models}
An energy-based model is defined with an energy function $E_{\btheta}$ parameterized by $\btheta$,
\begin{align}
    p_{\btheta}(\bx) = \frac{\exp(-E_{\btheta}(\bx))}{Z_{\btheta}}.
\end{align}
Here, $\bx \in \bR^d$, d is the feature dimension. $Z_{\btheta}$ is the partition function: $Z_{\btheta} = \int \exp(-E_{\btheta}(\bx))d\bx$, which is usually intractable. Various methods exist to train an energy-based model. Maximum likelihood training with Markov Chain Monte Carlo (MCMC) sampling \citep{hinton2002training} is one standard method to train the Energy-based model, but MCMC is typically computationally expensive.

A modern EBM often devises the energy function $E_\btheta(\mathbf{x})$ with a neural network, and $\btheta$ represents learnable parameters of the neural network \citep{song2021train}. These models are usually fit by score matching \citep{hyvarinen2005estimation, kingma2010regularized}, which avoids the explicit calculation of the $Z_{\btheta}$. Score matching minimizes the mean square error between the model score function $\bs_{\btheta}(\bx) = \nabla_\bx\log p_{\btheta}(\bx)$ and the data score function $\nabla_\bx\log p_{data}(\bx)$: 
\begin{equation}
    D_F(p_{data}(\bx)||p_{\btheta}(\bx)) = E_{p_{data}(\bx)}  \left[\frac{1}{2}||\nabla_\bx \log p_{data}(\bx) - \nabla_\bx \log p_{\btheta}(\bx)||_2^2\right].
\end{equation}
where $p_{data}(\bx)$ is the true data density function, $ p_{\btheta}(\bx)$ is the model density function. To bypass the intractable of the first term, we often use integration by parts to rewrite it as :
\begin{equation}
    D_F(p_{data}(\bx)||p_{\btheta}(\bx)) = E_{p_{data}(\bx)}  \left[\frac{1}{2}\sum_{i=1}^{d}\left(\frac{\partial E_{\btheta}(\bx)}{\partial x^i}\right)^2+\frac{\partial^2 E_\btheta(\bx)}{(\partial x^i)^2}\right] + const.
\end{equation}
Several efficient variants of the score matching have been proposed. One is Sliced Score Matching \citep{song2020sliced}, which randomly samples a projection vector $\bv$, takes the inner product between $\bv$ and the two scores, and then compares the resulting two scalars:
\begin{equation}
    D_F(p_{data}(\bx)||p_{\btheta}(\bx)) = E_{p_{data}(\bx)}E_{p(\bv)}  \left[\frac{1}{2}(\bv^\top\nabla_\bx \log p_{data}(\bx) - \bv^\top\nabla_\bx \log p_{\btheta}(\bx))^2\right].
\end{equation} 
The parallelizable variant of sliced Score Matching is Finite difference sliced score matching (FDSSM) \citep{pang2020efficient}.
Another score-matching variant is Denoising Score Matching \citep{vincent2011connection}(DSM). DSM avoids the computation of the second-order derivatives by adding some small noise perturbation to the data. Then DSM learns the noisy data distributions $q(\Tilde{\bx}) \approx p_{data}(\bx)$:
\begin{equation}
\label{eq:dsmobj}
    D_F(q(\Tilde{\bx})||p_{\btheta}(\Tilde{\bx})) = E_{q(\bx, \Tilde{\bx})}  \left[\frac{1}{2}|| \nabla_\bx \log q(\Tilde{\bx}|\bx) - \nabla_\bx \log p_{\btheta}(\Tilde{\bx})||_2^2\right].
\end{equation}

\subsection{Exponential-family distributions}
A distribution from the exponential family can be viewed as a special type of EBM whose energy function is linear in its parameters. Various common distributions such as Gaussian, Bernoulli, and Poisson are in Exponential-family distributions. Here we consider continuous distributions whose probability density function (PDF) takes the following form:
\begin{equation}
    p(\bx;\boldeta) = \frac{\exp(\boldeta^\top \bT(\bx))}{Z(\boldeta)}.
\end{equation}
In this form, we assume the base measure is 1. The statistic function $\bT(\bx)$ decides the distribution type, while the parameter $\boldeta$ decides the actual distribution of that type. The partition function $Z(\boldeta)$, which is often hard to compute, ensures that the PDF integrates to 1 over the domain of the distribution. 

The statistic function $\bT(\bx)$ is also meaningful in capturing data statistics. The statistics $\frac{1}{N}\sum_{i=1}^N \bT(\bx_i)$ computed from a dataset $(\bx_i: i = 1, \ldots, N)$ is called the data's \textit{sufficient statistics}, which provide sufficient information about the distribution parameter. Data fitting with $p(\bx;\boldeta)$ is centering at sufficient statistics: if the data likelihood is maximized with a particular $\boldeta$ under the model  $p(\bx;\boldeta)$, then $\E{p(\bx; \boldeta)}{\bT(\bx)} = \frac{1}{N}\sum_{i=1}^N \bT(\bx_i)$.

\section{A Hybrid Energy-based Model with Inductive Bias}
\label{sec:method}

This work proposes combining neural energy functions and interpretable statistic functions to develop EBMs. The hybrid model takes the following form:
\begin{equation}
p_{\btheta, \boldeta}(\bx) = \frac{\exp\left(F_\btheta(\bx) + \boldeta^\top \bT(\bx)\right)}{Z(\btheta, \boldeta)} . \label{eq:main-dist}
\end{equation}
Here $Z(\btheta,\boldeta) = \int_\bx \exp\left(F_\btheta(\bx) + \boldeta^\top \bT(\bx)\right) \mathrm{d}\bx$ is the partition function. $F_\btheta(\bx) \in \bR$ is a real-valued function constructed with a neural network parameterized by $\btheta$. The statistic function $\bT(\bx)$ itself has no learnable parameters, but it is multiplied to the learnable parameter $\boldeta$ in the linear term. 

The hybrid model still possesses a nice property of the exponential family distribution. Model fitting still aims to match the distribution mean of sufficient statistics to the sample mean \citep{wainwright2008graphical}. Let $l(\btheta, \boldeta)$ denote the log-likelihood of the data, $A(\btheta, \boldeta) = \log Z(\btheta, \boldeta)$:
\begin{align}\label{eq:loglike}
l(\btheta, \boldeta) &=  \sum_{i=1}^{n} F_\btheta(\bx_i) + \sum_{i=1}^{N} \boldeta^\top \bT(\bx_{i}) - NA(\btheta, \boldeta). 
 \end{align}

\begin{theorem}
If the parameter $\boldeta$ is at a local maximum of the $l(\btheta, \boldeta)$ for a fixed $\btheta$, then $\E{p_{\btheta, \boldeta}}{\bT(\bx)} = \frac{1}{N}\sum_{i=1}^{N} \bT(\bx_i)$.
\end{theorem}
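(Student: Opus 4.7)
The plan is to use the standard first-order optimality condition: at an interior local maximum of $l(\btheta,\boldeta)$ with respect to $\boldeta$, the gradient $\nabla_{\boldeta} l(\btheta,\boldeta)$ vanishes. I will then identify that gradient with the moment-matching equation using the classical identity that the gradient of the log-partition function equals the expected sufficient statistic under the model.

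First, from the expression for $l(\btheta,\boldeta)$ in equation \eqref{eq:loglike}, the only terms depending on $\boldeta$ are $\sum_i \boldeta^\top \bT(\bx_i)$ and $-N A(\btheta,\boldeta)$, so
\begin{equation*}
\nabla_{\boldeta} l(\btheta,\boldeta) \;=\; \sum_{i=1}^{N} \bT(\bx_i) \;-\; N\, \nabla_{\boldeta} A(\btheta,\boldeta).
\end{equation*}

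Second, I will compute $\nabla_{\boldeta} A(\btheta,\boldeta)$. Writing $A(\btheta,\boldeta) = \log \int_\bx \exp\!\left(F_\btheta(\bx) + \boldeta^\top \bT(\bx)\right) \mathrm{d}\bx$ and differentiating under the integral sign (which is justified under mild regularity conditions on $F_\btheta$ and $\bT$, such as domination of the integrand in a neighborhood of $\boldeta$), the chain rule gives
\begin{equation*}
\nabla_{\boldeta} A(\btheta,\boldeta) \;=\; \frac{1}{Z(\btheta,\boldeta)} \int_\bx \bT(\bx)\, \exp\!\left(F_\btheta(\bx) + \boldeta^\top \bT(\bx)\right) \mathrm{d}\bx \;=\; \E{p_{\btheta,\boldeta}}{\bT(\bx)},
\end{equation*}
which is the classical log-partition identity for exponential-family style models.

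Finally, combining the two displays, the first-order condition $\nabla_{\boldeta} l(\btheta,\boldeta) = 0$ at a local maximum yields
\begin{equation*}
\sum_{i=1}^{N} \bT(\bx_i) \;=\; N\, \E{p_{\btheta,\boldeta}}{\bT(\bx)},
\end{equation*}
which is the desired moment-matching equality after dividing by $N$.

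The only genuinely delicate step is the interchange of differentiation and integration in computing $\nabla_{\boldeta} A$; everything else is linearity and the first-order optimality condition. I would handle this by assuming the standard regularity that $\bT$ and its exponential tilt are such that $Z(\btheta,\boldeta)$ is finite and smooth in $\boldeta$ on an open neighborhood, which is the implicit assumption under which the model is well-defined in equation \eqref{eq:main-dist}.
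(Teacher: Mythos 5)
Your proposal is correct and follows essentially the same route as the paper: the first-order condition $\nabla_{\boldeta} l = 0$ combined with the identity $\nabla_{\boldeta} A(\btheta,\boldeta) = \E{p_{\btheta,\boldeta}}{\bT(\bx)}$, which the paper obtains by viewing $\exp(F_\btheta(\bx))$ as a base measure (and derives explicitly, just as you do, in its appendix). Your added remark about differentiating under the integral sign is a harmless extra precaution not spelled out in the paper.
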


\begin{proof}
If we treat $\exp\left(F_\btheta(\bx)\right)$ as a base measure, then $p_{\btheta, \boldeta}(\bx)$ is an exponential-family distribution. Then, the gradient of data likelihood with respect $\boldeta$ can be derived according to \citet{wainwright2008graphical} (eq.~3.38 and Prop.~3.1).
By taking the derivative of $L(\btheta, \boldeta)$ with respect to $\boldeta$, we have
\begin{align}\label{eq:firstd}
\nabla_\boldeta l &= \sum_{i=1}^{N} \bT(\bx_i) - N \cdot \mathbb{E}_{p_\btheta, \boldeta}[\bT(\bx)]
\end{align}
If the weight is at the local minimum, then $\nabla_\eta l = 0$, which leads to the conclusion.
\end{proof}
With this property, we will construct a hybrid model with specific statistics in its linear energy terms. Model fitting will pay attention to the specified statistics in the data. It can be viewed as an approach to injecting inductive bias into the model. Before we discuss such models in real applications, we first consider their training procedure. 

\subsection{Training the model}
We choose the score matching method to train the EBMs given its efficiency and effectiveness \cite{song2020score}. The score function for this hybrid model is:
\begin{align}
    \bs_{\btheta,\boldeta}(\bx) = \nabla_{\bx} F_\btheta (\bx) + \nabla_{\bx} \bT(\bx) \cdot \boldeta 
\end{align}
We can either take derivative of $F_\theta(\bx)$ to get $\nabla_{\bx} F_\btheta (\bx)$, or directly specify $\nabla_{\bx} F_\btheta (\bx)$ without specifying $F_\theta(\bx)$. In the latter case,  $\nabla_{\bx} F_\btheta (\bx)$ is approximated by a neural network with parameter $\theta$. Once we are here, we follow the standard procedure of denoising score matching \citep{vincent2011connection} to train our model:
\begin{align}
     \mathbb{E}_{\bx\sim p(\bx)}\mathbb{E}_{\tilde{\bx} \sim q_{\sigma}(\tilde{\bx}|\bx)} [||\nabla_{\bx} \log q_\sigma(\tilde{\bx}|\bx) - \bs_{\btheta,\boldeta}(\bx) ||^2_2 ].
\end{align}

Where $\tilde{\bx}$ is the perturbed data. To learn the score function better in the low-density regions, we adopt the technique by perturbing the data with K different noise levels $\{\sigma_i\}_{i=1}^K$and learn the noise-conditioned score network \citep{song2019generative} with the following loss:
\begin{align}
    \mathcal{L}(\btheta, \boldeta;\{\sigma_i\}_{i=1}^K) = \frac{1}{K}\sum_{i=1}^{K}\lambda(\sigma_i) \mathbb{E}_{\bx\sim p_{data}(\bx)}\mathbb{E}_{\tilde{\bx} \sim q_{\sigma_i}(\tilde{\bx}|\bx)} [||\nabla_{\bx} \log q_{\sigma_i}(\tilde{\bx}|\bx) - \bs_{\btheta,\boldeta}(\bx) ||^2_2 ].
\end{align}
Here $\lambda(\sigma_i)$ is the weight for each noise level.

Given the approximation here, one question is whether the trained model still matches the data statistics. Note that the score-matching loss is the lower bound of the log-likelihood \citep{song2021maximum} with appropriate weighting $\lambda$. 

\begin{align}
    \sum_{i=1}^n \log p_{\btheta, \boldeta}(\bx_i) &\ge L(\btheta,\boldeta;\lambda) 
\end{align}
Model training with score matching approximately maximizes the data likelihood. With a similar spirit, the learned model approximately matches the data statistics. Furthermore, the difference between the distribution mean and the data mean: $\Delta_{\bT(\bx)}=\mathbb{E}_{\bx\sim p_{data}}[\bT_{\bx}]- \mathbb{E}_{\bx\sim p_{model}}[\bT_{\bx}]$ approaches to zero as the model parameter $\boldeta$ approaches a local maximum of the log-likelihood. This can be proved based on the convexity of the log-likelihood function w.r.t. parameter $\boldeta$. For any $\theta$ fixed, $\exp\left(F_\btheta(\bx)\right)$ can be viewed as a base measure,  then the hybrid distribution in \eqref{eq:main-dist} falls in the exponential family of distributions, and its log-likelihood is convex w.r.t. $\boldeta$ by \citet{wainwright2008graphical}. Therefore, parameters $(\btheta^*, \boldeta^*)$ at a local maximum of the likelihood indicates that $\boldeta^*$ is at the maximum of the likelihood function with $\btheta^*$ fixed. From convex analysis \citep{boyd2004convex}, the norm of the gradient approaches zero as $(\btheta, \boldeta)$ approaches the local maximum.

\subsection{Inject inductive bias through the function $\bT(\bx)$} 
As in an exponential-family model, the statistic function points the hybrid model's attention to special statistics of data distribution. Therefore, it is a convenient approach to inject inductive bias into the model. Without any restrictions over the function form, the statistic function is a convenient approach to express such inductive bias. In this section, we leverage this property to develop models for three applications to show its practical value.

\subsubsection{Statistic function for fitting molecule data}

A molecular generative model is an important tool for chemical applications, and it is often defined as an EBM \citep{liu2021graphebm,niu2020permutation,jo2022score}. Here, we consider two molecular generative models, EDP-GNN \cite{niu2020permutation} and GDSS \cite{jo2022score}, which both can be viewed as energy-based models and specify distributions of molecules through their molecular graphs and atom types. Assume the atom has $k$ types. In these two models, the random variable $\bx = (\bb, \bA)$ represents a molecule graph containing $n$ nodes (atoms), with $\bb \in \{1, \ldots, k\}^{n}$  representing atom types in a molecule, and $\bA$ being the adjacency matrix representing the connectivity of corresponding atoms. Without the statistic function $T$, these two models use a neural network to define $\nabla_{\bx} F_\btheta (\bx)$. 

An energy-based model based on a neural function often assigns non-zero probabilities to ``invalid'' molecules even though the data contains zero such examples. In particular, all molecules in the data satisfy valency constraints. For example, a carbon atom has four bonds, an oxygen atom has two bonds, and a nitrogen atom has three or five bonds. However, samples from an EBM defined by a neural energy function often violate the valency constraint.  While there are methods employing other techniques to enforce valency constraints in the sampling stage \citep{zang2020moflow}, we consider such constraints in a canonical approach to data fitting. Specifically, we express the valency constraint as a statistic function: a valid molecule has zero value from the function, while an invalid molecule has a positive value. Let the constant vector $\bv \in \{1, \ldots \nu\}$ store valences for $k$ atom types, with $\nu$ being the maximum valence possible. Then, the valency constraint is: 
\begin{align*}
    \bA \times \mathbf{1} \le \mathrm{onehot}(\bb) \bv 
\end{align*}
Here $\mathrm{onehot}(\bb)$ represents each node with a one-hot vector that indicates its atom type, then $\mathrm{onehot}(\bb) \bv$ is a vector indicating valency values of all atoms in the molecule. From this constraint, we define the statistic function as:
\begin{align}
T(\bx) =\bone^\top \times \max(\bzero, \bA \times \mathbf{1} -  \mathrm{onehot}(\bb) \bv ),  
\end{align}
A valid molecule always has $T(\bx) = 0$, while an invalid molecule violating the valency constraint has  $T(\bx) > 0$.

\begin{wrapfigure}{r}{0.3\textwidth}
\vspace{-3mm}
  \centering
  \includegraphics[width=0.25\textwidth]{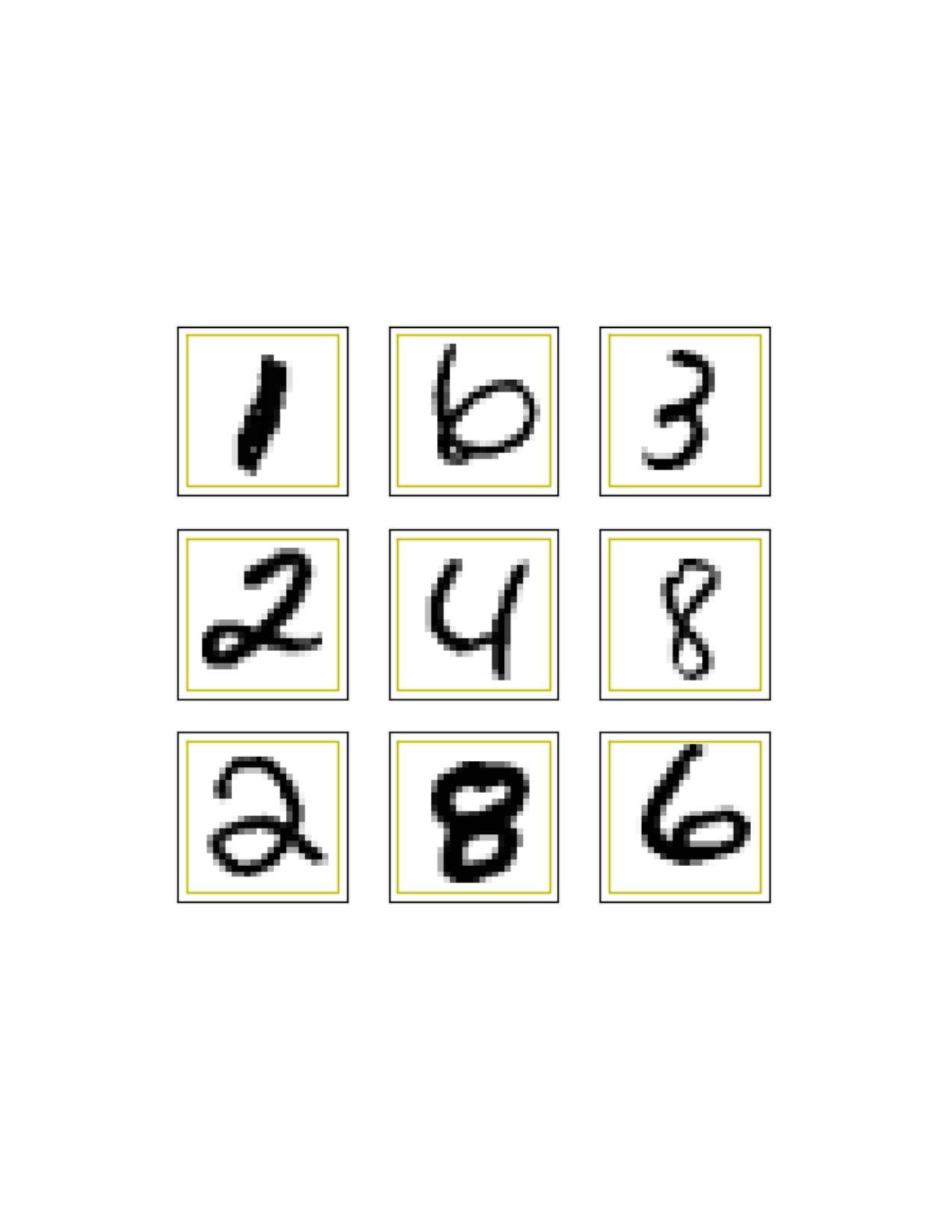} 
  \caption{All pixels outside the yellow bounding box are zero. This piece of prior knowledge is encoded in the statistics in \eqref{eq:statistic-mnist}}.
  \label{fig:mnist}
\end{wrapfigure}
As discussed above, if the model fits the data well, the expected value of $T(\bx)$ should be 0. Thus, the valency constraint is imposed over samples from the model. The neural energy function may also learn to generate molecules similar to the training data. Our new statistic function $T(\bx)$ makes the preference explicit in model fitting.   

\subsubsection{Statistic function for fitting handwritten digits}

In the handwritten digits dataset \cite{lecun1998mnist}, we observe that the margin area surrounding the digit has pixels all taking value 0. Fig.\ref{fig:mnist} shows some examples of handwritten digit images, and it can be observed that the pixels in the margin are all zeros. A model with this knowledge will better fit the data. We introduce a statistic function defined as the sum of pixels located at the boundaries of the image. Let $\bx\in \bR^{h\times h}$ represent one image, where $h$ and $h$ are the height and width of the image, respectively.

Let $\alpha$ be the width of the margin. 
\begin{align}
    T(\bx) =  \frac{\sum_{(i,j) \in S} |\bx(i,j)|}{|S|}
    \label{eq:statistic-mnist}
\end{align}

Here $S$ represents the set of pixels in the margin area $S = \{(i,j)| i 
 \leq \alpha \text{ or }i \geq h -  \alpha,~ j \leq \alpha\  \text{ or } j \geq h - \alpha \}$. With this statistic function, the model will explicitly consider pixel values in the margin area during the learning process. 

\subsubsection{Statistic function for fitting point clouds}
Point clouds have been widely used in computer graphics, computer vision, and robotics \citep{achlioptas2018learning, 10086697, he2023hierarchical, 10599623}. Several algorithms have been proposed for point cloud generation. In this work, we consider the DPM \citep{luo2021diffusion} and latent diffusion model. These two methods can be taken as directly modeling the joint distribution of all the points, meaning that $\nabla_{\bx} F_\btheta (\bx)$ is the gradient of all the points' joint distribution in our framework.

For the statistic term, considering smooth surfaces are a characteristic trait of high-quality point clouds, we define the following statistic function for the model:
 \begin{align}
    T(\bx) = tr(\bx^\top\bL\bx).
 \end{align} 
Here $\bx\in \bR^{N\times3}$ is the 3D coordinates of N points. $\bL$ is the sparse Laplacian matrix of the point clouds' k-nearest neighbor (k-nn) graph. The k-nn graph is constructed using a kd-tree, with a time complexity of $O(n \log n)$ \citep{preparata2012computational}, where n represents the number of data points. This is computationally more efficient than the naive k-nn method and can scale better. The function $T(\bx)$ computes the differences between a node and its neighbors in the graph. It is a measure of the uniformness of points in the point cloud. With this statistic function, the model fitting procedure will explicitly adjust the uniformness of generated point clouds to the same level as training samples.  

\section{Experiments}
We evaluate the effect of our special statistics on data fitting through three generative tasks: molecule graph generation, image generation, and point cloud generation. Experiments related to molecule generation are detailed in Section \ref{sec:mol}. Section \ref{sec:img} discussed the image generation experiments. Section \ref{sec:point} discussed the point cloud generation task. Each section briefly discusses each task's experiment setup and evaluation metrics we used. More details are in the Appendix. 

\subsection{Fitting molecular data}
\label{sec:mol}
In the first experiment, we incorporate our statistic function derived from valency constraints into an EBM model for molecule generation.  

\parhead{Experiment Settings:} Our model is evaluated on the QM9 dataset \citep{pinheiro2020machine}, which comprises a diverse collection of 133,885 molecules, each containing up to a maximum of 9 atoms. Following previous studies, the molecules are kekulizated using the RDKit library \citep{bento2020open}, and hydrogen atoms are removed. Two essential metrics are employed to evaluate the quality of the generated molecules: validity and valency ratio. A valid molecule must satisfy certain chemical constraints and rules, indicating a chemically reasonable structure. Validity is determined by calculating the proportion of valid molecules without valency correction or edge resampling. We evaluated our strategy on two methods: EDP-GNN\citep{niu2020permutation} and GDSS\citep{jo2022score}. We trained GDSS for 300 epochs, the batch size is 1024, and the learning rate is 5e-3. We trained EDP-GNN for 1000 epochs, the batch size is 1024, and the learning rate is 8e-3. 

\parhead{Results:} We first evaluate the validity ratio of generated molecules. The results are shown in the left column of Tab.~\ref{tab:mol}. Ten thousand molecules are sampled for evaluation. The results show that the model improves the validity ratio of the samples with our new term expressing the validity information. We further check expectation $\E{}{T(x)}$ in two distributions with and without the statistic function. Let $\Delta_{T(x)} = \E{p_{model}}{T(x)} - \E{p_{train}}{T(x)}$ denote the difference between the distribution mean and the sample mean of the statistics term. The results clearly show that our model pays attention to the new term and tries to match its mean to the sample mean of the data. It demonstrates that the extra valency term does help the model learn the valency property. 

\begin{table}[t]
\centering\caption{Results for molecule generation on QM9 dataset}
\begin{tabular}{c|c|c}
\hline
QM9                                                & Validity ratio(\%)$\uparrow$ & $\Delta_{T(\bx)}$($\downarrow$) \\ \hline
EDP-GNN                                            & 88.33              & 1.85                                             \\
\multicolumn{1}{l|}{EDP-GNN(with $T(x)$)} & \textbf{94.52}     & \textbf{0.98}                                    \\ \hline
GDSS                                               & 95.72              & 0.94                                             \\
\multicolumn{1}{l|}{GDSS(with $T(x)$)}    & \textbf{96.73}     & \textbf{0.93}                                    \\ \hline
\end{tabular}\label{tab:mol}
\end{table}

\subsection{Fitting data of hand-written digits}
\label{sec:img}
We evaluated the effectiveness of our method on the MNIST dataset, a widely used benchmark for various image downstream tasks. Each image in the MNIST dataset is a 28x28 grayscale representation of a handwritten digit from 0 to 9.

\parhead{Experiment Settings:}
We experimented on the two forward SDEs: VP-SDE and VE-SDE. The boarding pixels are extracted using a mask. The size of the mask is $22\times20$. The model is trained for 1000 epochs and then compared to the likelihood. The batch size is 4096, and the learning rate is 1e-2. The learning rate is kept constant for the first 300 epochs and decreases linearly from 300 to 1000 epochs. The parameter for the statistics term $\eta$ is initialized to be zero.
We assess the performances of competing models by comparing their likelihoods on the test set.

\parhead{Results:}
We evaluated our model via the test set negative log-likelihood (NLL) in terms of bits/dim (bpd). Tab.~\ref{tab:mnist} reports the averaged NLLs of probability flow ODE \cite{song2020score} over two repeated runs of likelihood computations. The results suggest incorporating the statistics term into the model improves the likelihood. This improvement implies that the model's distribution aligns better with the observed data when considering the inductive bias. The results also show that the difference between the distribution mean and the sample mean of the new statistic is smaller with our method, which implies that our model better captures the data statistic.
\begin{table}[t]
\centering
\caption{Negative Likelihood performance on MNIST dataset}
\begin{tabular}{c|c|c}
\hline
Method                                            & NLL ($\downarrow$)          & $\Delta_{T(\bx)} $($\downarrow$)\\ \hline
VE-SDE                                            & 3.56          & 6.52                                             \\
\multicolumn{1}{l|}{VE-SDE(with $T(x)$)} & \textbf{3.49} & \textbf{6.42}                                    \\ \hline
VP-SDE                                            & 3.37          & 6.48                                             \\
\multicolumn{1}{l|}{VP-SDE(with $T(x)$)} & \textbf{3.29} & \textbf{6.37}                                    \\ \hline
\end{tabular}\label{tab:mnist}
\end{table}

\subsection{Fitting data of small grayscale images}

We tested our approach on the FashionMNIST dataset \cite{kayed2020classification}. Similar to the MNIST dataset, we added the constraint on the boarding pixels. Let $\bx\in \bR^{h\times h}$ represent one image, where $h$ and $h$ are the height and width of the image, respectively. Let $h$ and $w$ represent the height and width of images in the dataset, and $\alpha$ be the width of the margin, then we use the same statistic function $T(\bx)$ in \eqref{eq:statistic-mnist} that computes the gray level of the border area of an image. In real data, the border area contains mostly white pixels. Without any specification, a typical generative model often overlooks such patterns.
Here, we explicitly emphasize this pattern in the model that fits with this statistic function. 

\parhead{Experiment Settings:} We evaluated performance using the VE-SDE method with a batch size of 64 and an initial learning rate 1e-2. The learning rate decays step-wise by multiplying it by 0.95 every ten epochs. The model was trained for 100 epochs. Boarding pixels were extracted using a $21 \times 19$ mask. We repeated the experiments with different random seeds three times and averaged the results.

\parhead{Results:} We assess the performance by comparing the negative log-likelihood (NLL) on the test set and the deviation between the distribution mean and the sample mean of the statistics term ($\Delta_{T(x)}$) against baseline methods. The results, presented in Table \ref{tab:fmnist}, indicate our method's performance improvements in NLL and the statistical term difference compared to the VE-SDE without $T(x)$.

\begin{table}[th]
\centering
\caption{Negative Likelihood performance on FashionMNIST dataset}
\begin{tabular}{c|c|c}
\hline
Method                                            & NLL ($\downarrow$)          & $\Delta_{T(\bx)} $($\downarrow$)\\ \hline
VE-SDE                                            &  4.605         & 47.12                                               \\
\multicolumn{1}{l|}{VE-SDE(with $T(x)$)} & \textbf{4.599} & \textbf{43.75}                                    \\ \hline

\end{tabular}\label{tab:fmnist}
\end{table}

\subsection{Fitting data of point clouds} 
\label{sec:point}
Point cloud datasets are typically collections of individual 3D points, each with its position in space and potentially associated attributes, forming a sparse and irregular representation of the underlying scene. Uniformity is a critical factor for generating point clouds. Clumping often occurs when points are concentrated in specific areas, potentially losing detail in other regions. On the other hand, sparsity results in areas with few points, leading to information loss and inaccuracies. Therefore, a generative model needs to pay attention to uniformness among data points to achieve premium results.

\begin{table}[t]
\centering
\caption{Comparison of shape generation on ShapeNet. MMD is multiplied by $10^2$. COV is multiplied by $10^2$. Mean diff is the difference between the sample mean and the distribution mean}
\setlength{\tabcolsep}{1mm}
\begin{tabular}{c|ccccc}
\hline
Category                  & Model                             & MMD ($\times 10^2,\downarrow$)  & \multicolumn{1}{l}{COV} ($\times 10^2, \uparrow$) & \multicolumn{1}{l}{1-NNA} (\%,$\downarrow$) & \multicolumn{1}{l}{$\Delta_{T(\bx)}$} ($\downarrow$) \\ \hline
\multirow{4}{*}{Airplane} & DPM                                & 0.572 & 43.75                   & 86.91                     & 102.71                         \\
                          & DPM(with $T(x)$)                    & \textbf{0.542} & \textbf{45.50}                    & \textbf{85.25}                     & \textbf{91.58}                         \\
                          & Latent diffusion model            & 0.389 & 49.11                   & 68.89                     & 39.90                         \\
                          & Latent diffusion model(with $T(x)$) & \textbf{0.387} & \textbf{49.60}                   & \textbf{67.04}                     & \textbf{37.52}                         \\ \hline
\multirow{4}{*}{Car}      & DPM                               & 1.140 & 34.94                   & 79.97                     & 326.65                        \\
                          & DPM(with $T(x)$)                    & \textbf{1.137} & 3\textbf{6.83}                   & \textbf{75.19}                     & \textbf{284.98}                        \\
                          & Latent diffusion model            & 0.802 & 43.70                    & 76.23                     & 203.02                        \\
                          & Latent diffusion model(with $T(x)$) & \textbf{0.781} & \textbf{45.31}                   & \textbf{73.3}                      & \textbf{187.90}                        \\ \hline
\end{tabular}\label{tab:shape}
\end{table}

 \parhead{Experiment Settings:}
 We evaluated our model on the ShapeNet dataset \citep{chang2015shapenet}, a widely used benchmark in 3D shape analysis and understanding. The ShapeNet dataset comprises 51,127 unique objects distributed across 55 categories. Each category represents a distinct class of objects, covering various shapes, such as airplanes, cars, chairs, and animals. The dataset's richness and diversity make it ideal for evaluating generation performance. We evaluated our strategy on the two models. One is the DPM model \citep{luo2021diffusion}, and the other is conducted on a latent diffusion model with the decoder is also the score-based model. The latent code is the shape code for each input point cloud. We use PointNet\citep{qi2017pointnet} to map the input points into a 512-dimension latent feature code for the encoder model. For the decoder and the latent prior score model, we use OccNet~\citep{mescheder2019occupancy}, which stacked 6 ResNet blocks with 256 dimensions for every hidden layer. We evaluated our model in two categories: airplane and car. We report Minimum Matching Distance(MMD), Coverage score(COV), and 1-NN classifier accuracy(1-NNA) to evaluate the quality of the samples.
 
\parhead{Results:}
 The results are shown in Tab.~\ref{tab:shape}. The results demonstrated that integrating smoothness constraints into generating and processing point cloud data improves generation quality. Encouraging a more even distribution of points makes the resulting point cloud representation more accurate, robust, and suitable for various applications. We further evaluate the difference between the sample mean and the distribution mean of the new statistic.  We see that our method reduces the gap, which indicates that our model better captures a key statistic in the data.

\subsection{The quality of the statistic function}\label{sec:invalideffect}
One question is whether an arbitrary function can act as a statistic term and improve the data fitting performance. The question is important because if an arbitrary statistic function can improve the model performance, it is easy for a neural model $F(\bx)$ to pick up such statistics in the energy function. In this section, we set up two types of statistic functions: the first type captures data statistics meaningfully, and the second type has no obvious relationship with the data. For the first type, we still use the statistic function specified by \eqref{eq:statistic-mnist}. For the second type, we set $T(\bx) = \sin(\mathbf{1}^\top \bx)$, which doesn't seem to capture any reasonable data statistics. We then learn two models respectively with the two statistics and check their corresponding parameters $\eta$. We conduct this experiment on the MNIST dataset. 

We get the results in Fig.\ref{fig:irIB}, where we plot $\eta$ values against training epochs. We see $\eta$ is positive for the statistic specified in \eqref{eq:statistic-mnist}. As a comparison, $\eta$ corresponding to the meaningless statistics $\sin(\mathbf{1}^\top \bx)$ converges to zero, which means that it does not help the model to learn. 

Another question arises: how to select a meaningful statistical function for a specific problem to help the model to learn? One approach is to leverage the domain knowledge associated with the problem. For instance, in the context of molecular graph generation, the chemical valency constraint is a critical factor to consider.
The domain constraints can be formulated as the statistic function of the generative model to enforce such knowledge.
\begin{figure}
    \centering
    \includegraphics[width=0.4\textwidth]{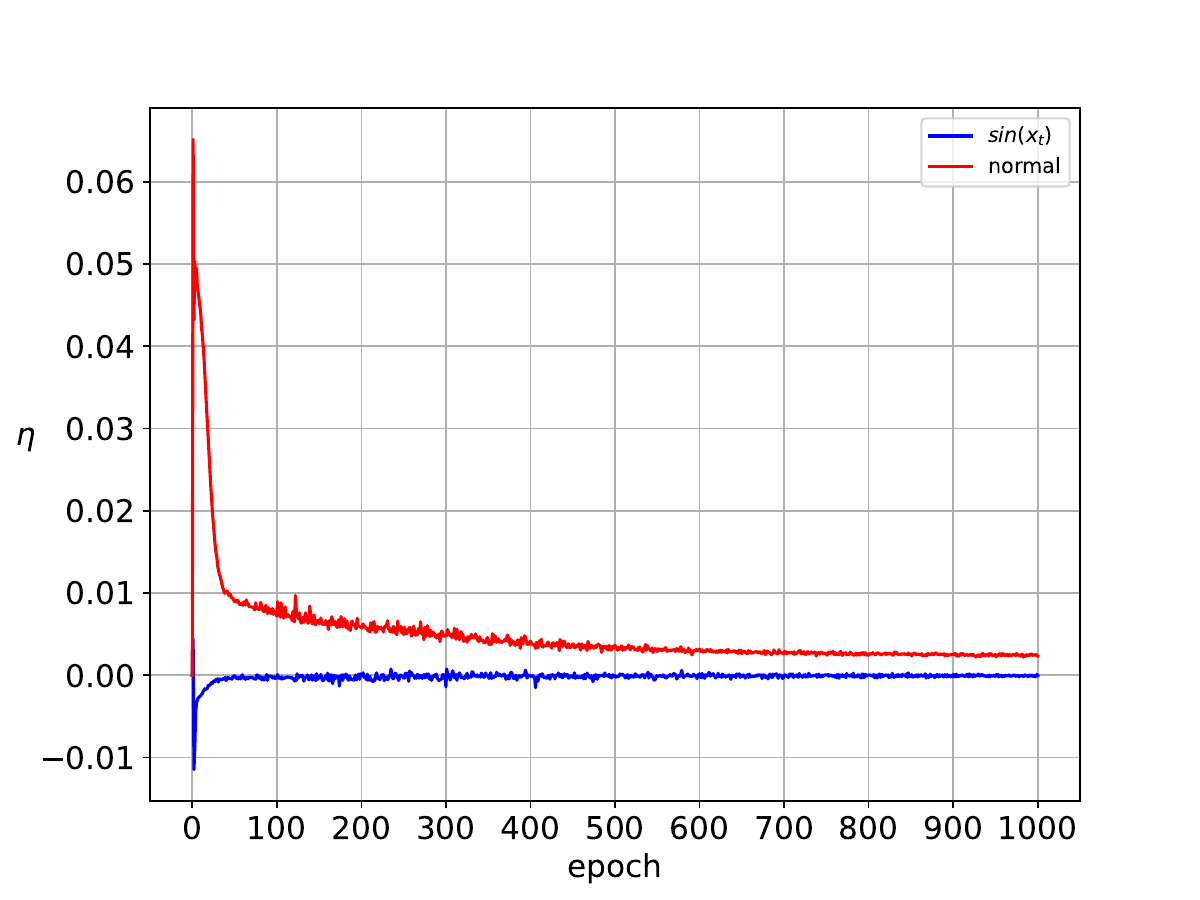}
    \caption{The comparison between $\eta$ values respectively for an arbitrary statistic $\sin(\mathbf{1}^\top \bx)$ and the mask statistic specified by \eqref{eq:statistic-mnist}. The result indicates that only specially designed statistics are likely to help the model learn.}
    \label{fig:irIB}
\end{figure}
\section{Conclusion}

In this work, we propose an energy-based model whose energy function includes a neural energy function and a specially designed statistic function. We show that this hybrid model inherits the nice property of exponential-family models, that is,  model training approximately matches the statistic's distribution mean to its sample mean. This property allows us to express special statistics in the energy function as an inductive bias. We have shown that this technique can be extensively applied to multiple applications. The experiments show that the proposed strategy improves the modeling performance on three data types,  molecular graphs, hand-written digits, and point clouds.

\subsubsection*{Acknowledgments}

We sincerely thank all reviewers and the editor for their insightful comments. The work was supported by NSF Award 2239869.

\bibliography{tmlr}
\bibliographystyle{tmlr}

\appendix
\section{Local minimum}
\begin{theorem*}
If $\boldeta$ is at a local maximum of the data log-likelihood, then $E_{p_\theta}[\bT(x)] = \frac{1}{N}\sum_{1}^{N}\bT(x_i)$.
\end{theorem*}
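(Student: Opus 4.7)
The plan is to reduce the statement to the standard moment-matching identity of exponential families, by treating $\exp(F_\btheta(\bx))$ as a fixed base measure. With $\btheta$ held constant, the density $p_{\btheta, \boldeta}(\bx)$ becomes a canonical exponential family in $\boldeta$ with sufficient statistic $\bT(\bx)$ and log-partition function $A(\btheta, \boldeta) = \log Z(\btheta, \boldeta)$, so the claim is just the well-known ``mean parameter equals sample mean'' condition at a likelihood maximizer.

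First I would write out the log-likelihood $l(\btheta, \boldeta)$ as a sum of the $\boldeta$-independent piece $\sum_i F_\btheta(\bx_i)$, the linear-in-$\boldeta$ data term $\sum_i \boldeta^\top \bT(\bx_i)$, and $-NA(\btheta, \boldeta)$, as in the main-text expression for $l$. Differentiating with respect to $\boldeta$ for fixed $\btheta$, the first piece vanishes and the second contributes $\sum_i \bT(\bx_i)$ immediately, so the nontrivial work is in computing $\nabla_\boldeta A$.

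The central step is the identity $\nabla_\boldeta A(\btheta, \boldeta) = \E{p_{\btheta, \boldeta}}{\bT(\bx)}$. I would obtain this by differentiating under the integral sign in the definition of $Z(\btheta, \boldeta) = \int \exp(F_\btheta(\bx) + \boldeta^\top \bT(\bx))\,d\bx$: the gradient pulls a factor of $\bT(\bx)$ into the integrand, and dividing by $Z(\btheta, \boldeta)$ converts the integral into an expectation under $p_{\btheta, \boldeta}$. Combining these two pieces yields $\nabla_\boldeta l = \sum_i \bT(\bx_i) - N\,\E{p_{\btheta, \boldeta}}{\bT(\bx)}$, and the local-maximum hypothesis forces this to vanish; dividing by $N$ gives the claim.

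The only real technical obstacle is justifying the interchange of $\nabla_\boldeta$ with the integral defining $Z$, but this is routine for regular exponential families and can be imported directly from Wainwright and Jordan (eq.~3.38 and Prop.~3.1), exactly as the main-text proof already cites. The rest of the argument is bookkeeping on $l(\btheta, \boldeta)$, so I would not expect any further subtleties.
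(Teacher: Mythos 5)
Your proposal is correct and follows essentially the same route as the paper: both treat $\exp(F_\btheta(\bx))$ as a fixed base measure so the model is a canonical exponential family in $\boldeta$, differentiate the log-likelihood, and compute $\nabla_\boldeta \log Z(\btheta,\boldeta) = \E{p_{\btheta,\boldeta}}{\bT(\bx)}$ by differentiating under the integral (exactly the computation in the paper's appendix, and the citation to Wainwright and Jordan in the main text). Setting the gradient to zero at the local maximum then gives the claim, just as in the paper.
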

\begin{proof}
Suppose we have N samples$\{x_1, x_2, \cdots, x_N\}$, and we want to train our model via MLE. 

First, the data likelihood can be written as :
\begin{align}\label{s1}
\begin{split}
     l &= \prod_{i=1}^{N}p_{\btheta, \boldeta}(\bsx) \\
     &= \frac{exp\big[\sum_{i=1}^{N} F_\btheta(x_i) + \boldeta^T\sum_{i=1}^{N}\bT(\bx_{i})\big]}{Z^n(\btheta, \boldeta)} \\
\end{split}
\end{align}
This leads to the log-likelihood as :
\begin{align}\label{s1}
\begin{split}
     L &= \log \prod_{i=1}^{N}p_{\btheta, \boldeta}(\bsx) \\
     &= \sum_{i=1}^{N}F_\btheta(\bx_i) + \boldeta^T\sum_{i=1}^{N}\bT(\bx_{i}) - n\log Z(\btheta, \boldeta) \\
\end{split}
\end{align}
Then, we take the derivatives on both sides,
\begin{align}
    \nabla_\boldeta L &= \sum_{i=1}^{N} \bT(\bx_i) - \frac{n}{Z(\btheta, \boldeta)} \frac{\partial Z(\btheta, \boldeta)}{\partial \boldeta} \\
    &= \sum_{i=1}^{N} \bT(\bx_i) - \frac{n}{Z(\btheta, \boldeta)} \int \bT(\bsx)exp(F_\btheta(\bsx)+\boldeta^T\bT(\bsx))d\bsx \\
    &= \sum_{i=1}^{N} \bT(\bsx_i) - n \cdot \mathbb{E}_{p_\btheta, \boldeta}(\bT(\bsx))
\end{align}
If we can find the local minimum, then $\nabla_\boldeta L = 0$, guaranteeing that the sample mean of the property function $\bT(\bsx)$ equals the expected value even if we added one more constraint.
\end{proof}

\section{Hyper-parameter sensitivity}
We examined the impact of various parameters on the model, with results shown in Table \ref{tab:ldm} and Table \ref{tab:hypgnn}.

We examined how the number of neural network layers affects the fitting point cloud data task. In Tab. \ref{tab:ldm}, $\Delta_{T(x)}$ denotes the difference between the distribution mean and the sample mean of the smoothness term. Our model is less sensitive to the hyper-parameter and has a smaller variance than the model without $T(\bx)$ and has a smaller difference.
Similar trends are observed in the task of fitting molecular data. In Tab. \ref{tab:hypgnn}, $\Delta_{T(x)}$ refers to the difference between the distribution mean and the sample mean of the valency term. The results show that as the number of GNN layers increases, EDPGNN exhibits a higher variance in the validity ratio and $\Delta_{T(\bx)}$, whereas EDPGNN with $T(\bx)$ shows minor changes. 

 \begin{table}[th!]
\centering
\caption{Effect of number of neural network layers}
\begin{tabular}{c|c|c}
\hline
\multicolumn{1}{l|}{Number of neural network layers } & \multicolumn{1}{l|}{\makecell{Latent diffusion  model \\ $\Delta_{T(\bx)}$ }}& \multicolumn{1}{l}{\makecell{Latent diffusion model (with $T(x)$) \\ $\Delta_{T(\bx)}$}} \\ \hline
2                                                & 45                                           & 40.14                                                  \\
4                                                & 42                                           & 39                                                     \\
6                                                & 39.9                                         & 37.52                                                  \\ \hline
\end{tabular}\label{tab:ldm}
\end{table}

\begin{table}[ht!]
\centering
\caption{Effect of the number of GNN layers }
\begin{tabular}{c|c|c|c|c}
\hline
\multicolumn{1}{l|}{Number of GNN layers} & \multicolumn{1}{l|}{\makecell{EDPGNN \\ Validity ratio (\%)}} & \multicolumn{1}{l|}{\makecell{EDPGNN (with $T(\bx)$) \\ Validity ratio (\%) }} & {\makecell{EDPGNN \\ $\Delta_{T(\bx)}$}} & {\makecell{EDPGNN (with $T(\bx)$)\\ $\Delta_{T(\bx)}$}} \\ \hline
1                                         & 0.863                       & 0.933                                  & 1.98   & 1.05              \\
2                                         & 0.878                       & 0.938                                  & 1.91   & 1.02              \\
3                                         & 0.882                       & 0.945                                  & 1.85   & 0.99              \\
4                                         & 0.883                       & 0.946                                  & 1.87   & 0.98              \\ \hline
\end{tabular}\label{tab:hypgnn}
\end{table}

\end{document}